\documentclass[twoside]{article}

\usepackage[preprint]{aistats2027}
\usepackage[round]{natbib}
\usepackage{algorithm}
\usepackage{algpseudocode}
\usepackage{amsthm}
\newtheorem{prop}{Proposition}
\newtheorem*{prop*}{Proposition}
\usepackage{pgfplots} 
\usepgfplotslibrary{statistics}
\usetikzlibrary{arrows.meta}
\pgfplotsset{compat=1.18}

\usepackage{amssymb}
\usepackage{array}
\usepackage{booktabs}
\usepackage{hyperref}

\usepackage{graphicx}
\usepackage{svg}
\usepackage{pgfplots}
\usepgfplotslibrary{groupplots}
\newtheorem{definition}{Definition}

\definecolor{elasticcolor}{RGB}{31,119,180}
\definecolor{loopedcolor}{HTML}{D9A05B}

\pgfplotsset{
    colormap={ckamap}{
        color(0cm)=(white);
        color(1cm)=(elasticcolor!70)
    },
    colormap={ckadifference}{
        color(0cm)=(red!70!black);
        color(1cm)=(white);
        color(2cm)=(elasticcolor)
    }
}

\begin{document}

% If your paper is accepted and the title of your paper is very long,
% the style will print as headings an error message. Use the following
% command to supply a shorter title of your paper so that it can be
% used as headings.
%
\runningtitle{I use this title instead because the last one was very long}

% If your paper is accepted and the number of authors is large, the
% style will print as headings an error message. Use the following
% command to supply a shorter version of the author names so that
% they can be used as headings (for example, use only the surnames)
%
\runningauthor{Mieszko Komisarczyk, Saurabh Mathur, Maurice Kraus, Sriraam Natarajan, Kristian Kersting}

\twocolumn[

\aistatstitle{{ElTab}: Elastically Looped Tabular Transformers}

\aistatsauthor{
Mieszko Komisarczyk$^{1,*}$,
Saurabh Mathur$^{1, *}$,
Maurice Kraus$^{1}$,
Sriraam Natarajan$^{2}$,
Kristian Kersting$^{1,3,4}$
}

\aistatsaddress{
$^{1}$ Department of Computer Science,
Technical University of Darmstadt, Germany \\
$^{2}$ Department of Computer Science,
University of Texas at Dallas, USA \\
$^{3}$ Hessian Center for Artificial Intelligence
(hessian.AI), Darmstadt, Germany \\
$^{4}$ German Research Center for Artificial
Intelligence (DFKI), Darmstadt, Germany \\[5pt]
$^{*}$ \textit{These authors contributed equally.}
}
]

\begin{abstract}
 Tabular transformer models such as TabPFN achieve strong predictive performance but incur substantial inference costs, making them difficult to deploy in resource-constrained settings. While looped transformers reduce parameter count through weight sharing, they can lose significant predictive power when stopped early. To address this challenge, we introduce the Elastically Looped Tabular Transformer (ElTab), a single model trained to operate across different computational budgets through supervision at randomly sampled early exits. Across 30 OpenML classification datasets, a single ElTab model matches six depth-specific looped models while cutting training cost by more than half, and matches a 12-layer TabPFN-style baseline with 25\% fewer inference FLOPs. These results show that elastic-depth training yields a single model that adapts to the deployment budget at hand, offering a favorable trade-off among predictive performance, storage, and computational cost.
\end{abstract}

\section{Introduction}

Tabular foundation models such as TabPFN\citep{hollmann2025accurate} have emerged as general-purpose models for tabular prediction, achieving strong predictive performance across diverse datasets. They predict on a new dataset in a single forward pass, conditioning a pretrained Transformer on the dataset's labeled examples instead of fitting dataset-specific parameters. However, this in-context prediction is expensive, as its cost grows with both backbone depth and attention that is quadratic in the number of context rows. Since depth is fixed at training time, a resource-constrained deployment cannot trade accuracy for compute at inference; its only option is to train a smaller model.

Looped Transformers~\citep{giannou2023looped} reduce the parameter count while preserving effective depth by repeatedly applying a shared stack of blocks. In tabular models, a single looped block has recently been shown to match a six-layer nanoTabPFN~\citep{balef2026one}. However, weight sharing alone does not make a model computationally adaptive. While a looped model can be stopped early, its intermediate representations are never trained to support predictions. In our experiments (Figure~\ref{fig:q1}), stopping a fixed-depth looped model after three of its six loops costs 7.3 AUC-ROC points on average across 30 OpenML datasets.

\begin{figure*}[t]
    \centering
    % ElTab performance--training-energy trade-off.
%
% Required in the paper preamble:
%   \usepackage{graphicx}
%   \usepackage{tikz}
%   \usepackage{pgfplots}
%   \usetikzlibrary{arrows.meta,calc,decorations.pathreplacing}
%   \pgfplotsset{compat=1.18}
%   \usepgfplotslibrary{statistics}
%
% Data sources (values are embedded to keep the figure self-contained):
%   result_csvs/all_looped_3x_depths_1_to_6_elastic_tabpfn_openml30_per_dataset.csv
%   result_csvs/training_energy_looped_3x_vs_elastic_raw.csv
%   result_csvs/training_energy_looped_3x_vs_elastic_summary.csv

\usepgfplotslibrary{statistics}

  \centering
  \definecolor{eltabblue}{HTML}{1769AA}
  \definecolor{fixedgray}{HTML}{66717E}
  \definecolor{specialistorange}{HTML}{D97706}
  \definecolor{savingsgreen}{HTML}{18864B}

  \resizebox{\textwidth}{!}{%
  \begin{tikzpicture}[font=\small]

    % -----------------------------------------------------------------
    % (a) Per-dataset performance distributions at each inference depth
    % -----------------------------------------------------------------
    \begin{axis}[
      name=performance,
      width=9.0cm,
      height=6.2cm,
      scale only axis,
      xmin=0.65,
      xmax=6.35,
      ymin=0.44,
      ymax=1.01,
      xtick={1,2,3,4,5,6},
      ytick={0.5,0.6,0.7,0.8,0.9,1.0},
      scaled y ticks=false,
      y tick label style={/pgf/number format/fixed,
                          /pgf/number format/precision=2},
      xlabel={Inference budget (loops; 1 loop = 3 blocks)},
      ylabel={Predictive Performance (AUC-ROC)},
      title={(a) A single ElTab model matches the oracle at every budget},
      title style={font=\bfseries,yshift=2pt},
      axis line style={black!55},
      tick style={black!55},
      grid=both,
      major grid style={black!12},
      minor grid style={black!6},
      minor x tick num=0,
      clip=false,
      boxplot/draw direction=y,
      boxplot/box extend=0.16,
      boxplot/whisker extend=0.11,
      legend style={
        at={(0.50,-0.17)},
        anchor=north,
        legend columns=3,
        draw=none,
        fill=none,
        font=\scriptsize,
        column sep=6pt,
        /tikz/every even column/.append style={column sep=8pt}
      }
    ]

      % Prepared Tukey box plots from the 30 per-dataset ROC--AUC values.
      % Arguments: position, lower whisker, Q1, median, Q3, upper whisker,
      %            color, and explicit outlier coordinates.
      \newcommand{\aucbox}[8]{%
        \addplot+[
          boxplot prepared={
            lower whisker=#2,
            lower quartile=#3,
            median=#4,
            upper quartile=#5,
            upper whisker=#6
          },
          boxplot/draw position=#1,
          forget plot,
          draw=#7,
          fill=#7!20,
          line width=0.75pt,
          solid,
          mark=*,
          mark size=1.35pt,
          mark options={fill=#7,draw=#7}
        ] coordinates {#8};%
      }

      % Depth 1.
      \aucbox{0.78}{0.6967554092}{0.8243254781}{0.9202135384}{0.9767554641}{1.0000000000}{eltabblue}
        {(0,0.4930042922) (0,0.5836659193)}
      \aucbox{1.00}{0.5472136299}{0.7542098264}{0.8093943278}{0.9046948870}{0.9995301922}{fixedgray}
        {(0,0.4969669461)}
      \aucbox{1.22}{0.7057118177}{0.8306503773}{0.9317875743}{0.9758734465}{1.0000000000}{specialistorange}
        {(0,0.4653538883) (0,0.5699553251)}

      % Depth 2.
      \aucbox{1.78}{0.7049637556}{0.8310952187}{0.9329963923}{0.9804547548}{1.0000000000}{eltabblue}
        {(0,0.5114739597) (0,0.5845966101)}
      \aucbox{2.00}{0.7032478293}{0.7775351226}{0.8261858493}{0.9232759953}{1.0000000000}{fixedgray}
        {(0,0.5188708196) (0,0.5444289108)}
      \aucbox{2.22}{0.7084049225}{0.8307804465}{0.9332516491}{0.9796430349}{1.0000000000}{specialistorange}
        {(0,0.5101301610) (0,0.5721772671)}

      % Depth 3.
      \aucbox{2.78}{0.7037687898}{0.8305565476}{0.9333531976}{0.9809667230}{1.0000000000}{eltabblue}
        {(0,0.4977958620) (0,0.5833584428)}
      \aucbox{3.00}{0.5288063814}{0.7633156081}{0.8124164740}{0.9202999135}{0.9999202867}{fixedgray}
        {}
      \aucbox{3.22}{0.7086466551}{0.8295211196}{0.9375414729}{0.9846338868}{1.0000000000}{specialistorange}
        {(0,0.5266000867) (0,0.5841290236)}

      % Depth 4.
      \aucbox{3.78}{0.7081556678}{0.8300886273}{0.9389732718}{0.9839246035}{1.0000000000}{eltabblue}
        {(0,0.5145492196) (0,0.5754610419)}
      \aucbox{4.00}{0.5386164546}{0.7215947608}{0.8087195143}{0.8505102019}{0.9489705821}{fixedgray}
        {(0,0.5183532099)}
      \aucbox{4.22}{0.7100215435}{0.8292415142}{0.9365899503}{0.9818930745}{1.0000000000}{specialistorange}
        {(0,0.5204102039) (0,0.5819969893)}

      % Depth 5.
      \aucbox{4.78}{0.7093255043}{0.8306032658}{0.9385397315}{0.9852691412}{1.0000000000}{eltabblue}
        {(0,0.5124869227) (0,0.5742333174)}
      \aucbox{5.00}{0.5206914475}{0.6794558704}{0.8049921547}{0.8417346934}{0.9130525331}{fixedgray}
        {}
      \aucbox{5.22}{0.7105460048}{0.8308714509}{0.9374347150}{0.9849271417}{1.0000000000}{specialistorange}
        {(0,0.5139376938) (0,0.5760122180)}

      % Depth 6.
      \aucbox{5.78}{0.7092062116}{0.8311247230}{0.9383881032}{0.9880951881}{1.0000000000}{eltabblue}
        {(0,0.5189821541) (0,0.5779796362)}
      \aucbox{6.00}{0.5131810943}{0.6687280521}{0.7905607951}{0.8333359051}{0.9063118458}{fixedgray}
        {}
      \aucbox{6.22}{0.7093184829}{0.8319809198}{0.9370593488}{0.9903906703}{1.0000000000}{specialistorange}
        {(0,0.5206532180) (0,0.5719299674)}

      \addlegendimage{only marks,mark=*,mark size=2pt,color=eltabblue}
      \addlegendentry{ElTab (one model)}
      \addlegendimage{only marks,mark=*,mark size=2pt,color=fixedgray}
      \addlegendentry{Fixed-depth: random training depth}
      \addlegendimage{only marks,mark=*,mark size=2pt,color=specialistorange}
      \addlegendentry{Fixed-depth: oracle training depth}

      \node[
        anchor=south east,
        font=\tiny,
        text=black!55,
        fill=white,
        fill opacity=0.85,
        text opacity=1,
        inner sep=1.5pt
      ] at (rel axis cs:0.99,0.01)
        {box: IQR; line: median; whiskers: $1.5\times$IQR; dots: outliers};
    \end{axis}

    % -----------------------------------------------------------------
    % (b) Aggregate energy required to obtain all six deployment depths
    % -----------------------------------------------------------------
    \begin{axis}[
      name=energy,
      at={($(performance.east)+(2.3cm,0)$)},
      anchor=west,
      width=7.0cm,
      height=5.15cm,
      scale only axis,
      xbar stacked,
      bar width=18pt,
      xmin=0,
      xmax=67,
      symbolic y coords={elastic,fixed},
      ytick=data,
      yticklabels={ElTab\\ (one model),{Six\\ budget-specific\\models}},
      y tick label style={font=\small,align=right},
      xlabel={Energy to obtain all six depths},
      title={(b) ElTab needs 58\% less training energy (kWh)},
      title style={font=\bfseries,yshift=2pt},
      axis x line*=bottom,
      axis y line*=left,
      axis line style={black!55},
      tick style={black!55},
      xtick={0,10,20,30,40,50,60},
      grid=major,
      major grid style={black!12},
      enlarge y limits=0.55,
      clip=false
    ]

      % Looped3x1,...,Looped3x6 form one stacked suite-level bar.
      \addplot+[draw=white,line width=0.45pt,fill=fixedgray!35]
        coordinates {(0,elastic) (3.042430,fixed)};
      \addplot+[draw=white,line width=0.45pt,fill=fixedgray!45]
        coordinates {(0,elastic) (5.871530,fixed)};
      \addplot+[draw=white,line width=0.45pt,fill=fixedgray!55]
        coordinates {(0,elastic) (8.698013,fixed)};
      \addplot+[draw=white,line width=0.45pt,fill=fixedgray!65]
        coordinates {(0,elastic) (11.495754,fixed)};
      \addplot+[draw=white,line width=0.45pt,fill=fixedgray!75]
        coordinates {(0,elastic) (14.348157,fixed)};
      \addplot+[draw=white,line width=0.45pt,fill=fixedgray!90]
        coordinates {(0,elastic) (17.163137,fixed)};

      % One elastically trained model supplies all six inference depths.
      \addplot+[draw=eltabblue!85!black,line width=0.6pt,fill=eltabblue]
        coordinates {(25.725834,elastic) (0,fixed)};

      \node[anchor=west,font=\scriptsize\bfseries,text=fixedgray!90!black]
        at (axis cs:61.2,fixed) {60.62};
      \node[anchor=west,font=\scriptsize\bfseries,text=eltabblue!90!black]
        at (axis cs:26.3,elastic) {25.73};

      % Keep the savings annotation in the gap between the bars and clear of the x-axis.
      \draw[
        decorate,
        decoration={brace,amplitude=4pt},
        savingsgreen!85!black,
        thick
      ] ([yshift=14pt]axis cs:25.725834,elastic) --
        node[
          midway,
          above=6pt,
          font=\small\bfseries,
          text=savingsgreen!80!black,
          align=center
        ] {$34.89$ kWh saved}
        ([yshift=14pt]axis cs:60.619020,elastic);

      \node[
        anchor=north east,
        font=\tiny,
        text=black!55,
        align=right
      ] at (rel axis cs:1.0,0.98)
        {stacked gray segments: Looped3x1--Looped3x6\\
         measured on one NVIDIA A100};
    \end{axis}

  \end{tikzpicture}%
  }
    \caption{\textbf{ElTab matches six budget-specific models, cutting training energy consumption by more than half.}
\textbf{(a)}~Per-dataset AUC-ROC on 30 OpenML datasets at each inference budget. Fixed-depth looped models are budget-specific, because their depth is set at training time, before the inference budget is known. This depth can be chosen at random (gray) or, as an idealized upper bound, by an oracle that knows the budget in advance (orange). A single ElTab model (blue), trained without knowing the budget, matches the oracle at every budget.
\textbf{(b)}~Estimated GPU energy for full training on one NVIDIA A100. Covering all six budgets with budget-specific models requires training six models (gray segments), which consume 60.6\,kWh in total. A single ElTab model covers the same budgets with 25.7\,kWh.}
    \label{fig:main}
\end{figure*}

\begin{figure*}[t]
\centering
\includegraphics[width=\linewidth]{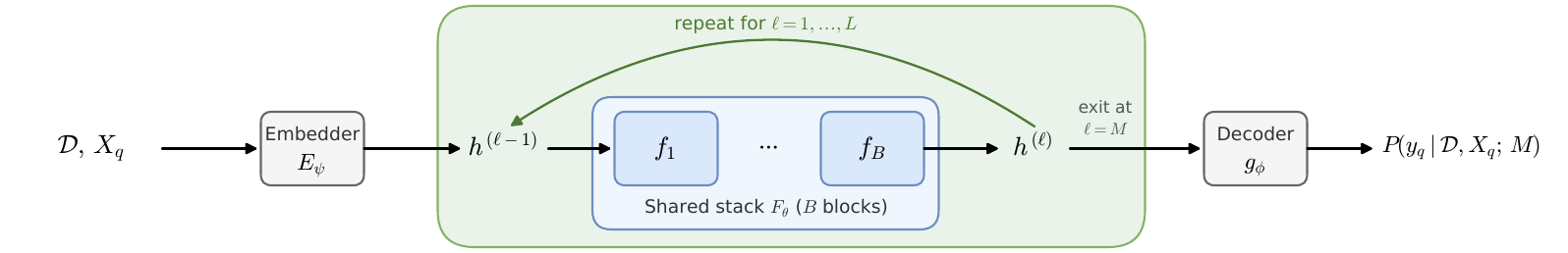}

% \caption{\textbf{Comparison of 12 looped model vs elastic one.}}
% \label{fig:elastic-vs-looped}
\caption{\textbf{ElTab architecture.} ElTab predicts the labels of query rows from a labeled context using three components. An embedder ($E_\psi$) turns the table into token representations. A shared stack of Transformer blocks ($F_\theta$) then refines these representations and can be applied repeatedly, up to $L$ times. Finally, a decoder ($g_\phi$) turns the refined query representations into class probabilities. Because ElTab is trained to predict after every loop, it can stop after any loop $M \le L$, letting the deployment choose how much compute to spend. All loops share the same weights and decoder.}
\end{figure*}

%whsiker plot, dots, depth oracle, LESS WHITE SPACE

To address this problem, we introduce ElTab, an \emph{Elastically  Looped Tabular Transformer,} whose recurrence depth can be adjusted to the computational budget available at inference time. At each training step, ElTab supervises both the full-depth output and one randomly sampled shallower exit, and distills the full-depth prediction into that exit. Thus, ElTab learns to predict after any loop, saving compute proportional to the context size for each skipped loop. To summarize, we make the following contributions:
\begin{itemize}
    \item We introduce ElTab, an elastically looped tabular foundation model that supports 6 inference budgets without retraining.

    % \item We propose an elastic-depth training objective that supervises a randomly sampled early exit and distills the full-depth prediction into it. 
    \item We propose an elastic-depth training objective that combines early-exit supervision with self-distillation.

    \item We evaluate ElTab on 30 OpenML classification benchmarks across 6 inference budget, showing that it matches an ensemble of 6 depth-specific looped models while reducing the training cost by 57\%. It matches the predictive performance of a 12-layer TabPFN-style baseline while using 25\% fewer inference FLOPs.
    
    \item We perform a centered kernel alignment study showing that elastic training makes latent representations similar across depths, demonstrating its robustness to truncation.
\end{itemize}

\section{Related work}

ElTab is related to three lines of research: tabular foundation models, looped Transformers, and methods for elastic inference budgets.  

\textbf{Tabular foundation models.} Tabular foundation models are pretrained neural networks that solve new tabular prediction tasks without task-specific training. They predict on a new dataset by conditioning the network on the dataset's labeled examples alongside the test queries, producing predictions in a single forward pass. The first such model, TabPFN~\citep{hollmann2022tabpfn,hollmann2025accurate}, is pretrained on millions of synthetic datasets sampled from a prior~\citep{muller2021transformers}, yielding strong predictive performance across diverse tabular benchmarks. Its architecture is based on the Transformer~\citep{vaswani2017attention}, with attention across rows that makes predictions invariant to row order. However, inference cost grows quadratically with the number of rows and linearly with the number of layers. Successive TabPFN versions have grown deeper~\citep{grinsztajn2025tabpfn}, raising inference cost and limiting their use in compute-constrained settings. One line of work lowers inference cost by replacing some or all Transformer layers with state-space models~\citep{koch2025state,komisarczyk2026tydra}, trading some of attention's expressiveness for efficiency. This trade-off is fixed by the architecture at training time. We instead repeatedly apply a few shared layers and train them elastically, allowing the trade-off between accuracy and compute to be chosen at inference time.

% Tabular foundation models cast tabular classification as an in-context learning problem. They condition a neural network on the entire training set alongside the test queries to produce predictions in a single forward pass. They are trained using synthetic datasets . Transformer-based Tabular foundation models, such as TabPFN have achieved  These models are based on the transformer architecture~\citep{}, attending over both rows and columns in every layer. As a result, the cost of each layer grows with the size of the context.  Successive versions have grown deeper, with TabPFN v2.5 roughly doubling the depth of v2~\citep{grinsztajn2025tabpfn}. TabPFN v3~\citep{grinsztajn2026tabpfn} still runs its full depth for every prediction. Since these models are pretrained on millions of synthetic tasks, supporting multiple inference compute budgets through separately trained models multiplies an already large training cost. ElTab addresses this challenge through a looped architecture that can be unrolled into additional layers based on available compute.

% Motivated by the need for more efficient inference, \citet{komisarczyk2026tydra} recently introduced a new line of hybrid tabular architectures that combine Transformer components with more computationally efficient sequence-processing mechanisms. ElTab takes a different but complementary approach by reducing the number of layers applied via looping.

\textbf{Looped Transformers.} Looped Transformers address the tension between the need for computational depth and parameter efficiency by applying the same layers repeatedly~\citep{giannou2023looped}. Instead of learning a deep stack of distinct layers, they learn a shallow stack and feed its output back as input over several iterations, refining the representation at no additional parameter cost. Applying a single block recurrently can match the performance of a deeper stack of distinct blocks~\citep{dehghani2018universal}. Small looped models also match or exceed much larger models on reasoning benchmarks such as ARC-AGI~\citep{jolicoeur2025less,komisarczyk2026recursive}. For tabular data, a single looped nanoTabPFN block has recently been shown to match a six-layer nanoTabPFN~\citep{balef2026one}. However, looping alone does not let a model adapt to the inference compute budget. Truncating a looped model trained at a fixed depth degrades its accuracy substantially (Section~\ref{sec:experiments}). ElTab addresses this limitation by training a looped TabPFN-style model to predict at any loop count.

\textbf{Elastic inference depth.} Deployment settings differ in the compute they can afford, yet training a separate model for each budget multiplies training and storage costs. Early-exit networks address this problem by attaching predictions to intermediate layers, allowing a single model to stop inference before the final layer~\citep{teerapittayanon2016branchynet,xin2020deebert,elhoushi2024layerskip}. Their shallower exits improve when trained to match the final prediction~\citep{zhang2019your}. The same principle has been applied to looped Transformers for latent reasoning, where conditioning each loop on its position and step size allows truncated models to degrade gracefully~\citep{jeddi2026loopformer}. ElTab extends elastic-depth training to tabular in-context learning.

% \textbf{Elastic computational budget.} Orthogonal to the choice of looping as an architecture, a separate line of work asks whether a model can be trained to operate under a variable, inference-time-selectable compute budget. \citet{frans2025one} introduce shortcut models, which condition a diffusion model on the desired step size and enforce self-consistency across trajectories of different lengths, enabling accurate generation anywhere from a single step to many, without retraining; \citet{xu2024expressive} give a theoretical account of the expressivity such step-conditioned shortcuts afford. \citet{jeddi2026loopformer} bring this idea to looped transformers for latent reasoning, conditioning each loop on both the current timestep and the chosen step size so that the same model degrades gracefully when truncated to fewer loops at inference. To the best of our knowledge, no such elastic, budget-conditioned training scheme has been investigated for looped tabular models.

\section{Elastically Looped Tabular Transformers}

% single tabular in-context learner 
% build a single tabular foundation model that can support 
% Given a new dataset and a budget, the model should predict the labels of the test queries without exceeding that budget. 
Tabular foundation models such as TabPFN are effective but expensive. Inference compute budgets may change dynamically, precluding its use. Training separate models for each budget is infeasible due to high training and storage costs. We aim to build a single model that can support tabular predictive inference at different inference compute budgets. We formalize this budget-constrained inference as follows,

\begin{center}
\fbox{\parbox{0.95\linewidth}{
\textbf{Given:} a labeled context $\mathcal{D} = \{(x_i, y_i)\}_{i=1}^{n}$, unlabeled queries $X_q$, and a compute budget $c \in \mathcal{C}$.\\[2pt]
\textbf{To Do:} Estimate the predictive probabilities $\{P(Y | x; \mathcal{D})\}_{x \in X_q }$ at a cost no greater than $c.$
}}
\end{center}

% Two obstacles make this goal difficult. First, a standard model can stop at an intermediate layer, but its intermediate representations are not trained for prediction. In tabular foundation models, these representations are informative yet misaligned with the final decoder, and predicting from them degrades accuracy~\citep{balef2026one}. Second, 
% Training a separate model for each budget avoids this problem but multiplies training and storage costs.

We address this problem by introducing the ElTab (Elastically Looped Tabular Transformers) family of models. ElTab models consist of a small stack of template layers, applied repeatedly at inference, and trained to predict after every repetition. The number of repetitions determines the cost and can be selected at deployment to fit the budget.

% \begin{figure*}[t]
%     \centering
%     \includegraphics[width=\linewidth]{AISTATS2027/Inference_Budget.pdf}
%     \caption{Caption}
%     \label{fig:inference_budget}
% \end{figure*}

\subsection{The ElTab Family of Models}

We formalize the ElTab family of models through three components. The \textbf{embedder} maps the context and the queries to token representations. The \textbf{template} stack refines these representations by applying the same layers repeatedly. Finally, the \textbf{decoder} maps the refined representations to class logits. The template stack is applied at most $L$ times, and the deployed model meets a budget by choosing how many times to apply it. We formalize ElTab as follows.

\begin{definition}[ElTab architecture]
Let $E_\psi$ denote the embedder, let
\begin{equation}
F_\theta^{[B]}
=
f_{\theta_B}\circ\cdots\circ f_{\theta_1}
\end{equation}
be a stack of $B$ Transformer blocks, and let $g_\phi$ denote the
decoder. Given a context set $\mathcal D$ and query inputs $X_q$, the
embedder produces the initial representations
\begin{equation}
h^{(0)}
=
E_\psi(\mathcal D,X_q).
\end{equation}

Let $L$ denote the maximum supported repetition depth. Repeated
applications of the stack produce
\begin{equation}
h^{(\ell)}
=
F_\theta^{[B]}\!\left(h^{(\ell-1)}\right),
\qquad
\ell=1,\ldots,L.
\end{equation}
The parameters $\theta=(\theta_1,\ldots,\theta_B)$ are shared across all
applications of the stack.

Let $h_q^{(\ell)}$ denote the query-token representations contained in
$h^{(\ell)}$. The prediction logits at depth $\ell$ are
\begin{equation}
z_\ell
=
g_\phi\!\left(h_q^{(\ell)}\right),
\qquad
\ell=1,\ldots,L.
\end{equation}
The same decoder $g_\phi$ is used at every depth.

Given a computational budget $c$, the inference depth is
\begin{equation}
M(c)
=
\max\left\{
m\in\{1,\ldots,L\}
\;\middle|\;
C_{\mathrm{ElTab}}(m)\leq c
\right\},
\end{equation}
assuming that the feasible set is nonempty. Under budget $c$, ElTab
applies the stack $M(c)$ times and returns $z_{M(c)}$.

The modules $(E_\psi,F_\theta^{[B]},g_\phi)$ define the ElTab
architecture, $L$ is its maximum repetition depth, and $M(c)$ is its
budget-dependent inference depth.
\end{definition} 
The model applies $B \cdot M$ blocks while storing the parameters of only $B$. Its inference cost therefore grows approximately linearly with $M$ at a fixed parameter count.

\begin{figure*}[t]
    \centering
    \includegraphics[width=\textwidth]{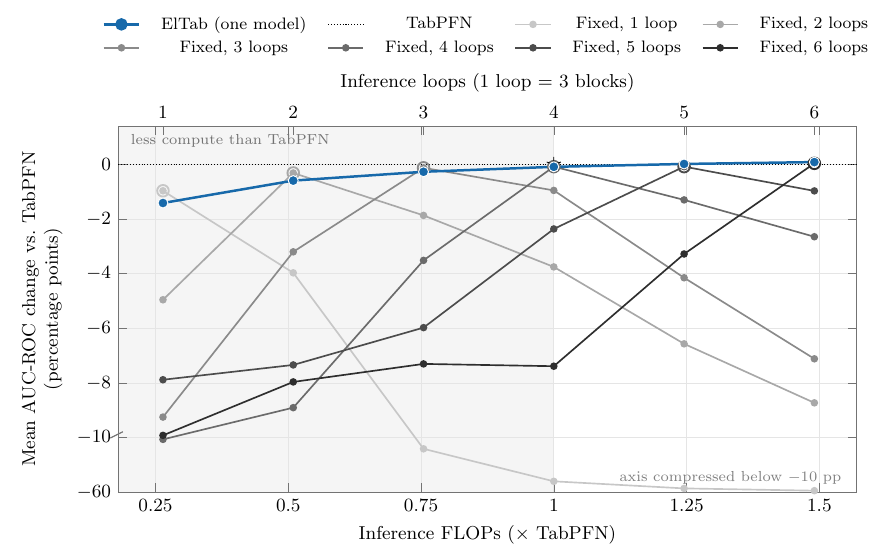}
    \caption{\textbf{Looped models match TabPFN with less compute at the budget they were trained for; ElTab does so across budgets.} Change in mean AUC-ROC on 30 OpenML datasets relative to TabPFN (dotted line; star at its own compute), plotted against inference compute. Gray lines show six fixed-depth looped models, each trained for one budget and evaluated after every loop; rings mark the budget each was trained for. Each fixed-depth model comes close to TabPFN at its own budget, even below TabPFN's compute, but drops sharply when run at any other budget. A single ElTab model (blue) stays close to TabPFN across budgets, matching it to within 0.27 points while using 25\% fewer FLOPs.}
    \label{fig:q1}
\end{figure*}

\subsection{Training}

ElTab is pretrained on synthetic classification tasks drawn from a prior
$\pi$, following the training protocol of recent TabPFN-style
models~\citep{koch2025state,komisarczyk2026tydra}. Each task consists of
a labeled context and a set of query examples. At every training step,
we sample a shortcut depth $k$ uniformly from $\{1,\ldots,L-1\}$ and
perform two separate forward passes through the same shared model: a
full-depth pass with $L$ applications of the block stack and a shortcut
pass with $k$ applications. Both predictions are supervised using the
query labels. In addition, the shortcut prediction is distilled from
the detached full-depth prediction, which serves as a fixed teacher for
the distillation term. Sampling a new shortcut depth at each step
provides supervision to every depth over the course of training. We
formalize this training objective below.

\begin{definition}[ElTab objective]
Let $\Theta = (\psi, \theta, \phi)$ collect all parameters and $\tau = (\mathcal{D}, X_q, y_q)$ denote a task. For a sampled depth $k$, the loss function is defined as
\begin{equation}
\begin{aligned}
\mathcal{L}(\Theta; \tau, k) ={}& \mathrm{CE}(y_q, z_L) + \lambda_{\mathrm{s}}\, \mathrm{CE}(y_q, z_k) \\ &+ \lambda_{\mathrm{d}}\, T^2\, \mathrm{KL}\big(\mathrm{sg}[p_L] \,\|\, p_k\big),
\end{aligned}
\end{equation}
where $p_\ell = \mathrm{softmax}(z_\ell / T)$ with temperature $T$, and $\mathrm{sg}[\cdot]$ denotes the stop-gradient operator. During model training, we minimize the expected loss
\begin{equation}
\mathbb{E}_{\tau \sim \pi,\, k \sim \mathcal{U}_L} \big[\mathcal{L}(\Theta; \tau, k)\big],
\end{equation}
where $\mathcal{U}_L$ is the uniform distribution over $\{1,\dots,L-1\}$.
\end{definition}

We use $\lambda_{\mathrm{s}} = \lambda_{\mathrm{d}} = 0.1$ and $T = 2$. These small weights keep the full-depth term dominant. Setting both weights to zero recovers standard fixed-depth training. We optimize with AdamW at a learning rate of $10^{-4}$, an effective batch size of 64, and gradient-norm clipping at 1.0 for 200 epochs. Algorithm~\ref{alg:train} summarizes the procedure.

% The first term preserves predictive performance at the maximum depth. The second term directly supervises the randomly selected shortcut, ensuring that predictions made under smaller computational budgets remain useful. Finally, the distillation term encourages the shortcut prediction to match the softer class distribution produced by the full-depth path. We detach the full-depth distribution in this term so that it acts as a teacher, while it continues to be optimized through its supervised cross-entropy loss.

\begin{algorithm}[H]
\caption{ElTab $B{\times}L$ Training}
\label{alg:train}
\begin{algorithmic}[1]
\Require Task prior $\pi$; embedder $E_{\psi}$;
shared stack $F_{\theta}^{[B]}$ of $B$ blocks;
decoder $g_{\phi}$; maximum loop depth $L$;
weights $\lambda_{\mathrm{s}},\lambda_{\mathrm{d}}$;
temperature $T$

\While{not converged}
    \State Sample $(\mathcal{D},X_q,y_q)\sim\pi$
    \State Sample
    $k\sim\mathcal{U}\!\left(\{1,\ldots,L-1\}\right)$

    \Statex
    \State $H_L^{(0)}
        \gets E_{\psi}(\mathcal{D},X_q)$
        \Comment{full-depth forward pass}
    \For{$\ell=1,\ldots,L$}
        \State $H_L^{(\ell)}
        \gets F_{\theta}^{[B]}\!\left(H_L^{(\ell-1)}\right)$
    \EndFor
    \State $z_L
        \gets g_{\phi}\!\left(H_{L,q}^{(L)}\right)$

    \Statex
    \State $H_k^{(0)}
        \gets E_{\psi}(\mathcal{D},X_q)$
        \Comment{separate shortcut forward pass}
    \For{$\ell=1,\ldots,k$}
        \State $H_k^{(\ell)}
        \gets F_{\theta}^{[B]}\!\left(H_k^{(\ell-1)}\right)$
    \EndFor
    \State $z_k
        \gets g_{\phi}\!\left(H_{k,q}^{(k)}\right)$

    \State $p_L^{(T)}
        \gets \operatorname{softmax}\!\left(z_L/T\right)$
    \State $p_k^{(T)}
        \gets \operatorname{softmax}\!\left(z_k/T\right)$

    \State $\mathcal{L}
        \gets \operatorname{CE}(y_q,z_L)
        + \lambda_{\mathrm{s}}\operatorname{CE}(y_q,z_k)$
    \Statex \hspace{\algorithmicindent}\hspace{\algorithmicindent}
        $\displaystyle
        + \lambda_{\mathrm{d}}T^2
        D_{\mathrm{KL}}\!\left(
        \operatorname{sg}[p_L^{(T)}]
        \,\middle\|\,
        p_k^{(T)}
        \right)$

    \State Update $\Theta=(\psi,\theta,\phi)$ using
        $\nabla_{\Theta}\mathcal{L}$
\EndWhile
\end{algorithmic}
\end{algorithm}

Because the shortcut depth is sampled uniformly at each training step,
every depth below $L$ receives supervision over the course of training.
The additional shortcut forward pass is used only during training. At
inference time, ElTab performs a single forward pass and stops after the
selected number of complete applications of the shared block stack.
Thus, a single set of learned parameters supports multiple
accuracy--compute operating points. Algorithm~\ref{alg:elastic-looped-inference}
summarizes the inference procedure. As the inference depth increases,
the computational cost grows linearly with the number of loops, whereas
the parameter count remains fixed. The following proposition formalizes
these properties.

\begin{algorithm}[H]
\caption{ElTab $B{\times}L$ Inference}
\label{alg:elastic-looped-inference}
\begin{algorithmic}
\Require Context $\mathcal{D}$; queries $X_q$; inference budget $M \in \{1,\ldots,L\}$; stack $F^{[B]}_{\theta}$ of $B$ blocks
\State $h^{(0)} \gets E_{\psi}(\mathcal{D}, X_q)$
\For{$\ell = 1,\ldots,M$}
    \State $h^{(\ell)} \gets F^{[B]}_{\theta}\!\left(h^{(\ell-1)}\right)$
\EndFor
\State \Return $g_{\phi}\!\left(h_{q}^{(M)}\right)$ \Comment{query class logits}
\end{algorithmic}
\end{algorithm}
\begin{prop}[Inference cost of ElTab]
\label{prop:cost}
Consider an ElTab model with a layer template stack $F_\theta$ of $B$ blocks and a maximum depth of $L$ loops. The model predicts the labels of queries $X_q$ from a context $\mathcal{D}$, with $N$ rows in total, and stops after $M \in \{1, \dots, L\}$ loops chosen to fit the inference budget. Denote the cost of this prediction by $C(M, N)$, the cost of one block on this input by $c(N)$, and the combined cost of the embedder $E_\psi$ and the decoder $g_\phi$ by $c_0(N)$. Then:
\begin{enumerate}
    \item The cost grows linearly with the number of loops: $C(M, N) = c_0(N) + BM\,c(N)$.
    \item If the embedder and decoder costs are negligible, stopping after $M$ loops costs a fraction $M/L$ of full depth: $C(M, N)/C(L, N) \approx M/L$.
   \item For hidden dimension $d$, the template layer stack has $\Theta(Bd^2)$ parameters, independent of $M$, $L$, and $N$.
    \item With dense attention among the $N$ rows and feed-forward width proportional to $d$, each block costs $c(N) = O(N^2 d + N d^2)$. Hence $C(M, N) = O\big(BM(N^2 d + N d^2)\big)$.
\end{enumerate}
\end{prop}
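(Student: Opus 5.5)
The plan is to read the cost off Algorithm~\ref{alg:elastic-looped-inference} directly, treating cost as additive over the sequential operations executed. The algorithm calls the embedder $E_\psi$ once. It then runs the loop body $M$ times, each iteration applying $F_\theta^{[B]} = f_{\theta_B}\circ\cdots\circ f_{\theta_1}$. Finally it calls the decoder $g_\phi$ once, on the query tokens only.

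\textbf{Parts 1 and 2.} The key modelling point is that every block acts on a representation of the same shape. The embedder fixes the token layout for the $N$ rows, and each $f_{\theta_b}$ maps it to a tensor of the same shape. Hence every block application, in every loop, costs the same $c(N)$; this is where I would state explicitly the assumption that all $B$ template blocks share an architecture, which is how the statement defines $c(N)$. Summing gives $C(M,N)=c_0(N)+BM\,c(N)$, which is part 1. For part 2, I would write
\[
\frac{C(M,N)}{C(L,N)}=\frac{c_0(N)+BMc(N)}{c_0(N)+BLc(N)}.
\]
I would then make ``negligible'' precise as $c_0(N)/(B\,c(N))\to 0$. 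Under that condition the ratio tends to $M/L$, with error at most $c_0(N)/(B L c(N))$ in absolute value.

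\textbf{Part 3.} Weight sharing means the stored parameters are exactly $\theta=(\theta_1,\dots,\theta_B)$, and no loop introduces new weights. Each Transformer block has the following parameters:
\begin{itemize}
\item query, key, value and output projections, giving $\Theta(d^2)$;
\item feed-forward matrices of width $\Theta(d)$, also $\Theta(d^2)$;
\item layer-norm and bias terms, giving $O(d)$.
\end{itemize}
So each block has $\Theta(d^2)$ parameters, and the stack has $\Theta(Bd^2)$. None of these shapes depends on $M$, $L$ or $N$, since attention weights act per token regardless of sequence length.

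\textbf{Part 4.} This is the only step with real content. For one block on $N$ tokens of width $d$:
\begin{itemize}
\item the projections cost $O(Nd^2)$;
\item forming $QK^\top$ costs $O(N^2 d)$, as does the softmax-weighted sum with $V$;
\item the feed-forward layer costs $O(N\cdot d\cdot d)=O(Nd^2)$.
\end{itemize}
Thus $c(N)=O(N^2d+Nd^2)$, and substituting into part 1 gives the stated bound for the block term.

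The main obstacle is that a TabPFN-style block may attend along two axes: across rows and across features. With per-cell tokens, the token count is $N$ times the number of features, not $N$. I would handle this by treating the number of features (or feature groups) as a constant absorbed into the $O(\cdot)$. I would also note that the claimed bound covers $BM\,c(N)$ and absorbs $c_0(N)$, since the embedder and decoder are per-token maps of cost $O(Nd^2)$ or less.
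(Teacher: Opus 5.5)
Your proposal is correct and follows the paper's proof essentially step for step. The paper proves part 1 by induction on $B$, each added block contributing $M\,c(N)$, which is your direct summation over Algorithm~\ref{alg:elastic-looped-inference} written inductively; parts 2--4 use the same ratio computation, weight-sharing parameter count, and $O(N^2d)+O(Nd^2)$ per-block accounting, and your error bound $c_0(N)/(BL\,c(N))$ and argument that $c_0(N)=O(Nd^2)$ can be absorbed are minor sharpenings of the paper's informal ``negligible'' and ``$c_0(N)$ dominated by $O(N^2d)$'' assumptions.
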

The proof is given in Appendix~\ref{app:proof}.

\section{Empirical evaluation}\label{sec:experiments}

%bigger figure, move the legend

%\begin{figure}[t]
    %\centering
    %\input{partials/per_dataset.tex} 
    %\label{fig:per_dataset}
    %\caption{A single ElTab model achieves 11\% better %AUROC on average across the 6 inference budgets, %without requiring retraining. }
%\end{figure}
\begin{table}[t]
    \centering
    \caption{
        Training computation and checkpoint storage for the fixed-depth
        looped models and ElTab. FLOPs include profiler-supported operations
        in the forward pass, loss computation, and backward pass. Total FLOPs
        are computed over 200 epochs with 4,672 microbatches per epoch. For
        ElTab, the reported computation is the expectation over uniformly
        sampled shortcut depths \(M \in \{1,\ldots,5\}\), together with the
        full-depth trajectory.
    }
    \label{tab:training_flops}
    \begin{tabular}{lrrr}
        \toprule
        Model
        & \shortstack[r]{FLOPs per\\batch\\(\(\times 10^{12}\))}
        & \shortstack[r]{Total\\FLOPs\\
        (\(\times 10^{18}\))
        }
        & \shortstack[r]{Ckpt.\\size\\(MB)} \\
        \midrule
        Looped3x1
            & \(0.415 \)
            & \(0.3874 \)
            & \(27\) MB \\
        Looped3x2
            & \(0.801 \)
            & \(0.7487\)
            & \(27\) MB \\
        Looped3x3
            & \(1.188\)
            & \(1.110\)
            & \(27\) MB \\
        Looped3x4
            & \(1.575 \)
            & \(1.471\)
            & \(27\) MB \\
        Looped3x5
            & \(1.961 \)
            & \(1.833\)
            & \(27\) MB \\
        Looped3x6
            & \(2.348 \)
            & \(2.194\)
            & \(27\) MB \\
        \midrule
        Looped total
            & \(8.288 \)
            & \(7.744\)
            & \(162\) MB \\
        \textbf{ElTab }
            & \(\mathbf{3.536}\)
            & \(\mathbf{3.304}\)
            & \(\mathbf{27}\) MB \\
        \bottomrule
    \end{tabular}
\end{table}
\begin{figure*}[t!]
   \centering
   \resizebox{\textwidth}{!}{\input{partials/elastic_per_dataset.tex}}
%     \caption{A single ElTab model achieves 11\% better AUROC on average across the 6 inference budgets, without requiring retraining}
\caption{\textbf{A single ElTab model outperforms every fixed-depth model across budgets on nearly all datasets.} AUC-ROC on each of 30 OpenML datasets, averaged over all six inference budgets, for ElTab and for fixed-depth looped models trained for one to six loops (top: one to three loops; bottom: four to six loops). Averaging over budgets reflects deploying one model at every budget, including those it was not trained for. Datasets are sorted by ElTab's gain over the mean of the six fixed-depth models. ElTab matches or outperforms all six fixed-depth models on 28 of 30 datasets. Its gains are largest where the fixed-depth models depend most on their training depth, such as mfeat-fourier and MiceProtein.}\label{fig:per_dataset}
\end{figure*}

We evaluate the accuracy, efficiency, and internal representations of ElTab through the following questions.

\begin{itemize}
    \item[(Q1)] Can looped models match the predictive performance of a standard TabPFN-style model at lower inference compute?
    \item[(Q2)] Can a single ElTab model match six independently trained fixed-depth looped models across inference budgets?
     \item[(Q3)] How does elastic training change the similarity of hidden representations across loop depths?
\end{itemize}

\subsection{Models}
To answer these questions, we compare ElTab with two kinds of baselines. ElTab loops a stack of three blocks up to six times and is trained with the elastic objective. The first baseline is a standard 12-block TabPFN-style model, as in~\citep{koch2025state}, which we refer to as TabPFN. The second is a set of depth-specific looped models (Looped3$\times$1 to Looped3$\times$6), each looping the same three-block stack a fixed number of times and trained only at that depth. All models use TabPFN-style blocks with identical dimensions~\citep{koch2025state}.

\subsection{Datasets}
We compare these models on the 30 binary and multiclass classification datasets from OpenML-CC18~\citep{bischl2017openml} that contain at most 2,000 observations, 100 features, and 10 classes, following~\citep{koch2025state}. Each dataset is split into a context and a query set. We report AUC-ROC on the query set, computed for multiclass datasets.

\subsection{Metrics}
We assessed each model based on predictive quality and inference efficiency. Predictive quality was quantified using the area under the receiver operating characteristic curve (AUC-ROC). On each dataset, results were averaged over five reproducible 50/50 train–test partitions, with every partition stratified to preserve all observed classes in both subsets. We present the mean AUC-ROC in the main text and report its standard deviation across splits in the appendix.

We quantify the energy savings of the elastic model relative to a
fixed-depth baseline as
\[
\boxed{
\text{Energy savings}
=
\frac{E_{\mathrm{fixed}}-E_{\mathrm{elastic}}}
     {E_{\mathrm{fixed}}}
\times 100\%
}
\]
where \(E_{\mathrm{fixed}}\) and \(E_{\mathrm{elastic}}\) denote the energy
consumed by the fixed-depth and elastic models, respectively, under the same
evaluation conditions. Positive values indicate that the elastic model
consumes less energy than the fixed-depth baseline, while negative values
indicate higher energy consumption. We compute the energy savings separately
for each dataset and experimental run and report the mean across datasets.

\textbf{(Answer, Q1: Looped vs.\ TabPFN)} To evaluate whether looped models can match TabPFN while spending less compute at inference, we compare TabPFN with the six fixed-depth looped models and ElTab on all 30 datasets. Because each loop applies three blocks and TabPFN applies twelve, four loops match TabPFN's compute, and fewer loops use less. Figure~\ref{fig:q1} shows how AUC-ROC changes with inference compute for each model. Each fixed-depth model comes close to TabPFN at the budget it was trained for, even below TabPFN's compute. For example, the model trained for three loops matches TabPFN to within 0.13 points while using 25\% fewer FLOPs. However, each fixed-depth model drops sharply when run at other budgets, by up to 58.6 points. In contrast, ElTab stays close to TabPFN across budgets, matching it to within 0.27 points at the same 25\% saving. Therefore, looped models can match the predictive performance of TabPFN at lower inference compute.

% \textbf{(Answer, Q1: Looped vs.\ TabPFN)} To evaluate whether a looped model can match TabPFN at lower inference compute, we compare TabPFN, ElTab, and the depth-specific looped models on all 30 datasets. Figure~\ref{fig:new} plots the change in mean ROC-AUC relative to TabPFN against inference FLOPs. At matched compute, ElTab applies its three-block stack four times and stays within [X] points of TabPFN while storing a quarter of its block parameters. At three loops, ElTab uses 25\% fewer FLOPs than TabPFN and loses only 0.27 points of ROC-AUC. The depth-specific Looped3$\times$6 model, truncated to the same budget, loses [X] points. Figure~\ref{fig:inference_budget} shows that this holds across datasets, not only on average, with median ROC-AUC staying close to TabPFN's down to [budget]. Sharing parameters across loops therefore does not limit accuracy. Elastic training extends this result to budgets below TabPFN's, answering Q1 affirmatively.

% \subsection{(Answer Q2)}

\textbf{(Answer, Q2: Elastic vs.\ budget-specific)} To evaluate whether one elastic model can match a set of budget-specific models, we compare ElTab with the six fixed-depth looped models, Looped3$\times$1 to Looped3$\times$6. Since each fixed-depth model commits to a depth at training time, we consider two ways of choosing that depth: (1) at random, and (2) an idealized method with an oracle that knows the inference budget in advance. Figure~\ref{fig:main}(a) shows the per-dataset AUC-ROC of ElTab and both choices at each budget. ElTab tracks the oracle at every budget, matching it to within 0.12 points on average and 0.45 points at most. In contrast, a randomly chosen depth falls well below both, because a fixed-depth model drops sharply when run at budgets for which it was not trained (Figure~\ref{fig:q1}). Figure~\ref{fig:per_dataset} confirms this pattern on individual datasets. ElTab matches or outperforms all six fixed-depth models on 28 of 30 datasets, on average across the six budgets. However, without an oracle, matching ElTab's accuracy at every budget would require training all six fixed-depth models. ElTab instead uses 57\% fewer training FLOPs and 58\% less GPU energy (Table~\ref{tab:training_flops}, Figure~\ref{fig:main}(b)), and needs one stored model instead of six. Therefore, a single ElTab model can match six fixed-depth looped models across inference budgets, without knowing the budget at training time.

\textbf{(Answer, Q3: Representations)} To evaluate how elastic training changes hidden representations, we compare ElTab with the fixed-depth Looped3$\times$6 model, which shares its architecture and maximum depth. For each model, we measure the similarity of hidden representations after each pair of loops $i, j \in \{1, \dots, 6\}$ using centered kernel alignment (CKA)~\citep{kornblith2019similarity}. In both models, representations at nearby loops are more similar than those at distant loops, consistent with gradual refinement across loops. Figure~\ref{fig:cka} shows how this similarity differs between the two models. Every off-diagonal entry is positive, and the difference is largest between the earliest loops and the final loop. These earliest loops are also where a truncated fixed-depth model loses the most accuracy. Elastic training therefore aligns representations most at the exits that need it most, consistent with ElTab's small accuracy loss when stopped early.

\begin{figure}[t]
    \centering
    \definecolor{eltabgreen}{HTML}{4f9e6b}

\begin{tikzpicture}
\begin{axis}[
    title={$\text{CKA}_\text{ElTab} - \text{CKA}_\text{Looped}$},
    width=5.5cm,
    height=5.8cm,
    xmin=0.5,
    xmax=6.5,
    ymin=0.5,
    ymax=6.5,
    xtick={1,2,3,4,5,6},
    ytick={1,2,3,4,5,6},
    xlabel={Depth $j$},
    ylabel={Depth $i$},
    y dir=normal,
    axis on top,
    enlargelimits=false,
    tick label style={font=\scriptsize},
    label style={font=\small},
    title style={font=\small},
    colormap={greenscale}{color(0cm)=(white); color(1cm)=(eltabgreen)},
    point meta min=0,
    point meta max=0.10,
    colorbar,
    colorbar style={
        width=0.15cm,
        ylabel={CKA difference},
        ylabel style={font=\scriptsize},
        tick label style={font=\tiny},
        ytick={0,0.025,0.05,0.075,0.10}
    }
]

\addplot[
    matrix plot*,
    point meta=explicit,
    draw=white,
    line width=0.25pt,
    mesh/cols=6
] coordinates {
    (1,1) [0.0000] (2,1) [0.0095] (3,1) [0.0230]
    (4,1) [0.0381] (5,1) [0.0639] (6,1) [0.0948]

    (1,2) [0.0095] (2,2) [0.0000] (3,2) [0.0118]
    (4,2) [0.0284] (5,2) [0.0490] (6,2) [0.0759]

    (1,3) [0.0230] (2,3) [0.0118] (3,3) [0.0000]
    (4,3) [0.0108] (5,3) [0.0309] (6,3) [0.0622]

    (1,4) [0.0381] (2,4) [0.0284] (3,4) [0.0108]
    (4,4) [0.0000] (5,4) [0.0132] (6,4) [0.0468]

    (1,5) [0.0639] (2,5) [0.0490] (3,5) [0.0309]
    (4,5) [0.0132] (5,5) [0.0000] (6,5) [0.0176]

    (1,6) [0.0948] (2,6) [0.0759] (3,6) [0.0622]
    (4,6) [0.0468] (5,6) [0.0176] (6,6) [0.0000]
};

\end{axis}
\end{tikzpicture} 
    \caption{\textbf{Elastic training keeps early-exit representations close to the final one.} Similarity of the hidden representations of ElTab after loops $i$ and $j$, as compared to an equivalent fixed-depth Looped3$\times$6 model. Similarity is measured as centered kernel alignment (CKA), which ranges from 0 (unrelated) to 1 (identical up to rotation and scaling). Positive values (green) mean that ElTab's representations at the two loops are more similar than those of Looped3$\times$6. All off-diagonal entries are positive, and the largest differences occur between the earliest loops and the final loop, showing that elastic training aligns representations most where early exits need it most.}\label{fig:cka}
\end{figure}

\section{Conclusions and Future Work}
In this work, we addressed the challenge of deploying tabular foundation models in settings where available compute is limited and variable. We introduced ElTab, a looped model trained to predict after every loop, allowing inference depth to be chosen based on the budget available at deployment, without requiring any retraining. One ElTab model matches TabPFN with 25\% less inference compute, and matches a separately trained model at every budget for less than half the training cost. Thus, ElTab's elastic training enables a single model to adapt to the compute budget available at deployment.

There are several directions for future work. ElTab's elastic training can be extended to adaptive computation by learning a stopping rule from its per-loop predictions, spending more compute on difficult queries and less on easy ones~\citep{bae2025mixture}. It can also be extended beyond loop counts to computational paths, by sampling which blocks are applied, skipped, or repeated during training~\citep{jeddi2026loopformer}. Finally, because the elastic objective adds no parameters, it can be extended to larger tabular foundation models by applying it to their looped backbones. Together with these directions, elastically trained looped models offer a favorable trade-off between predictive performance and inference cost for tabular foundation models deployed under limited and variable compute.

\subsection*{AI use statement}
In this work, we used generative AI tools to assist in developing the software implementation of the presented method, to create and edit the code that renders the presented figures, and to provide writing assistance to improve the clarity of the presentation. We have not used generative AI tools to generate, clean, or reformat any data sets; develop theoretical models or the method's conceptual frameworks; formulate, prove, or write down mathematical claims; propose or refine any hypotheses; assist with translation; or interpret scientific results. All AI-assisted work was carefully reviewed and verified by the authors for correctness, including checking all generated or edited code to ensure it correctly corresponds to the method presented in the paper. We take responsibility for the final content of this work, including text, claims, or artifacts produced with the aid of generative AI.
% In this work, we used generative AI tools to assist in developing the software implementation of the presented method and to provide writing assistance to improve the clarity of the presentation.
% We have not used generative AI tools to generate, clean, or reformat any data sets; develop theoretical models or the method's conceptual frameworks; formulate, prove, or write down mathematical claims; propose or refine any hypotheses; assist with translation; or interpret scientific results.
% Additionally, we used generative AI tools to create and edit the code that renders the presented figures.
% All AI-assisted work was carefully reviewed and verified by the authors for correctness.
% This includes checking all generated or edited code to ensure it correctly corresponds to the method presented in the paper.
% We take responsibility for the final content of this work, including text, claims, or artifacts produced with the aid of generative AI.

\subsection*{Acknowledgements}
We gratefully acknowledge support from the BMFTR project XEI (grant number 16IS24079B), the Cluster of Excellence ``Reasonable AI'' funded by the German Research Foundation (DFG) under Germany’s Excellence Strategy (EXC-3057), the DYNAMIC center funded by the LOEWE program of the Hessian Ministry of Science and Arts (grant number LOEWE1/16/519/03/09.001(0009)/98), German Federal Ministry for Economic Affairs and Energy (BMWE) through EU-SAI (grant number 13IPC040G), and the US Army Research Office (award W911NF2010224).

\bibliography{bibliography}
\clearpage

\appendix
\thispagestyle{empty}

% Supplementary material: To improve readability, you must use a single-column format for the supplementary material.
\onecolumn
\aistatstitle{                                                                                                     
Supplementary Materials}
\section{Experimental Setup}

\subsection{Implementation}  Our code is available at \url{https://anonymous.4open.science/r/ElTab--4684}.

\subsection{Hardware and Environment} All models were trained and evaluated on a single A100 GPU with 80GB memory, ensuring a consistent compute environment across runs.  Datasets for each training run were independently sampled from the same prior, so results remain comparable across models in terms of data distribution.

\section{Proofs} \label{app:proof}

\begin{prop*}[Inference cost of ElTab]
\label{prop:cost}
Consider an ElTab model with a layer template stack $F_\theta$ of $B$ blocks and a maximum depth of $L$ loops. The model predicts the labels of queries $X_q$ from a context $\mathcal{D}$, with $N$ rows in total, and stops after $M \in \{1, \dots, L\}$ loops chosen to fit the inference budget. Denote the cost of this prediction by $C(M, N)$, the cost of one block on this input by $c(N)$, and the combined cost of the embedder $E_\psi$ and the decoder $g_\phi$ by $c_0(N)$. Then:
\begin{enumerate}
    \item The cost grows linearly with the number of loops: $C(M, N) = c_0(N) + BM\,c(N)$.
    \item If the embedder and decoder costs are negligible, stopping after $M$ loops costs a fraction $M/L$ of full depth: $C(M, N)/C(L, N) = M/L$.
   \item For hidden dimension $d$, the template layer stack has $\Theta(Bd^2)$ parameters, independent of $M$, $L$, and $N$.
    \item With dense attention among the $N$ rows and feed-forward width proportional to $d$, each block costs $c(N) = O(N^2 d + N d^2)$. Hence $C(M, N) = O\big(BM(N^2 d + N d^2)\big)$.
\end{enumerate}
\end{prop*}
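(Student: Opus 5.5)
We argue directly from the ElTab architecture definition and Algorithm~\ref{alg:elastic-looped-inference}, taking the four claims in order. For part~1, we trace the inference procedure. It makes one call to the embedder $E_\psi$, then $M$ applications of $F_\theta^{[B]}$, then one call to the decoder $g_\phi$. Each application of $F_\theta^{[B]}=f_{\theta_B}\circ\cdots\circ f_{\theta_1}$ runs $B$ blocks. Every block acts on a representation of the same shape, namely $N$ row tokens of width $d$, because the loop maps $h^{(\ell-1)}$ to $h^{(\ell)}$ of identical shape. Hence every block invocation costs the same $c(N)$. Weight sharing does not change this per-call cost, since cost depends on the input shape and the architecture, not on which parameter values are used. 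Summing gives $C(M,N)=c_0(N)+BM\,c(N)$, where $c_0(N)$ collects the embedder and decoder. The only care needed is to state the cost model explicitly as additive over sequentially executed modules. We will also make explicit that the decoder reads only $h_q^{(M)}$, so its cost does not depend on $M$.

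Part~2 follows immediately. We form the ratio $\frac{c_0+BMc}{c_0+BLc}$ and set $c_0(N)=0$ to get exactly $M/L$. If instead we only assume $c_0(N)\ll BL\,c(N)$, we bound the ratio between $M/L$ and $M/L+c_0/(BLc)$, which justifies the $\approx$ in the main-text statement.

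For part~3, we count parameters per block. Each TabPFN-style block has attention projections $W_Q,W_K,W_V,W_O\in\mathbb{R}^{d\times d}$, contributing $4d^2$. It also has a feed-forward network of width $\alpha d$, contributing $2\alpha d^2$. Layer norms and biases add $O(d)$. Summing these gives $\Theta(d^2)$ per block. The two-way attention of TabPFN, across rows and across features, only multiplies this by a constant. The stack stores $\theta=(\theta_1,\dots,\theta_B)$ once, so it has $\Theta(Bd^2)$ parameters. None of these matrices is indexed by $N$, $M$, or $L$, because the loop reuses $\theta$.

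For part~4, we cost one block on $N$ tokens. The projections take $O(Nd^2)$. Computing $QK^\top$ and multiplying the attention matrix by $V$ take $O(N^2d)$. The softmax takes $O(N^2)$. The feed-forward network takes $O(N\alpha d^2)=O(Nd^2)$. Together this is $c(N)=O(N^2d+Nd^2)$, and substituting into part~1 gives the stated bound, with $c_0$ absorbed under the assumption that the embedder and decoder are at most linear in $N$ and at most $O(d^2)$ per token.

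The main obstacle is making the cost model precise rather than any computation. In particular, it is not obvious that $c(N)$ is identical across blocks and across loops when TabPFN restricts query tokens to attend only to context rows. We would handle this by defining $c(N)$ as the cost under a fixed mask, which is the same for every invocation because the mask does not depend on $\ell$. Any feature-axis dimension would be treated as a fixed constant folded into the $O(\cdot)$.
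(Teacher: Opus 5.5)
Your proposal is correct and follows the paper's proof essentially step for step. Part~1 is direct cost accounting; the paper phrases it as an induction on $B$, adding one block at a time, which amounts to the same count as your summation. Parts~2--4 match as well: the ratio for part~2, a per-block $\Theta(d^2)$ parameter count with shared weights for part~3, and the $O(N^2d+Nd^2)$ attention and feed-forward bound with $c_0(N)$ absorbed under a mild assumption for part~4.

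Your part~2 is slightly sharper than the paper's, which only argues $\approx M/L$. You note that exact equality requires $c_0(N)=0$, and otherwise you give the bracket $M/L \le C(M,N)/C(L,N) \le M/L + c_0(N)/(BL\,c(N))$.
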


\begin{proof}
\begin{enumerate}
    \item We use induction on $B$. Denote by $C_B(M,N)$ the cost of prediction with $B$ blocks.
    For $B=1$, the cost is that of an $M$-looped model with a single block, namely $C_1(M,N)=c_0(N)+Mc(N)$.
    Now assume the claim holds for $B$ and add a $(B+1)$-th block after the first $B$ blocks. In each of the $M$ loops, this block adds a cost of $c(N)$. Hence
    \begin{equation} \label{eq:prop}
        C_{B+1}(M,N)=C_B(M,N)+Mc(N)=c_0(N)+BMc(N)+Mc(N)=c_0(N)+(B+1)Mc(N).
    \end{equation}
    \item From Equation~\ref{eq:prop} we obtain
    \begin{equation}
        C(M,N)=c_0(N)+BMc(N), \qquad C(L,N)=c_0(N)+BLc(N).
    \end{equation}
    Since $c_0(N)$ is negligible,
    \begin{equation}
        \frac{C(M,N)}{C(L,N)}=\frac{c_0(N)+BMc(N)}{c_0(N)+BLc(N)}\approx\frac{BMc(N)}{BLc(N)}=\frac{M}{L}.
    \end{equation}
    \item Each transformer-based block has $\Theta(d^2)$ parameters, and the weights are shared across loops, so the loop count $M$ (and $L$) does not affect the parameter count. Therefore $B$ blocks yield $\Theta(Bd^2)$ parameters, independent of $M$, $L$, and $N$.
    \item Dense attention among $N$ rows costs $O(N^2d)$, and the projections and feed-forward layers cost $O(Nd^2)$, so $c(N)=O(N^2d+Nd^2)$. Assuming $c_0(N)$ is dominated by $O(N^2d)$, Equation~\ref{eq:prop} gives $C(M,N)=O\big(BM\,c(N)\big)=O\big(BM(N^2d+Nd^2)\big)$.
\end{enumerate}
\end{proof}

\subsection{Datasets}For evaluation of our models, we considered the 30 datasets from the OpenML-CC18 benchmark suite~\cite{bischl2017openml} with at most 2{,}000 samples, 100 features, and 10 classes, following \cite{hollmann2022tabpfn} and \cite{koch2025state}. Table~\ref{datasets} presents these datasets along with their characteristics.

\subsection{Results Evaluation}
Table ~\ref{tab:eltab_detail} presents the detailed AUC-ROC results for  different depths of ElTab and TabPFN across all datasets. Table ~\ref{tab:looped} presents the detailed AUC-ROC results for all looped models and TabPFN across all datasets.
\begin{table*}[t]
\centering
\caption{
Characteristics of the 30 OpenML classification datasets, filtered for TabPFN limitations.
}
\label{tab:openml-dataset-characteristics}

\scriptsize
\setlength{\tabcolsep}{3.2pt}
\renewcommand{\arraystretch}{1.04}

\begin{tabular}{
    @{}
    >{\raggedright\arraybackslash}p{8.7cm}
    rrrrrrr
    @{}
}
\toprule
\textbf{Name}
&
\shortstack{\textbf{\#}\\\textbf{Feat.}}
&
\shortstack{\textbf{\#}\\\textbf{Cat.}}
&
\shortstack{\textbf{\#}\\\textbf{Inst.}}
&
\shortstack{\textbf{\#}\\\textbf{Class.}}
&
\shortstack{\textbf{\#}\\\textbf{NaNs}}
&
\shortstack{\textbf{Min. Class}\\\textbf{Size}}
&
\shortstack{\textbf{OpenML}\\\textbf{ID}}
\\
\midrule

balance-scale
& 5 & 1 & 625 & 3 & 0 & 49 & 11 \\

mfeat-fourier
& 77 & 1 & 2000 & 10 & 0 & 200 & 14 \\

breast-w
& 10 & 1 & 699 & 2 & 16 & 241 & 15 \\

mfeat-karhunen
& 65 & 1 & 2000 & 10 & 0 & 200 & 16 \\

mfeat-morphological
& 7 & 1 & 2000 & 10 & 0 & 200 & 18 \\

mfeat-zernike
& 48 & 1 & 2000 & 10 & 0 & 200 & 22 \\

cmc
& 10 & 8 & 1473 & 3 & 0 & 333 & 23 \\

credit-approval
& 16 & 10 & 690 & 2 & 67 & 307 & 29 \\

credit-g
& 21 & 14 & 1000 & 2 & 0 & 300 & 31 \\

diabetes
& 9 & 1 & 768 & 2 & 0 & 268 & 37 \\

tic-tac-toe
& 10 & 10 & 958 & 2 & 0 & 332 & 50 \\

vehicle
& 19 & 1 & 846 & 4 & 0 & 199 & 54 \\

eucalyptus
& 20 & 6 & 736 & 5 & 448 & 105 & 188 \\

analcatdata\_authorship
& 71 & 1 & 841 & 4 & 0 & 55 & 458 \\

analcatdata\_dmft
& 5 & 5 & 797 & 6 & 0 & 123 & 469 \\

pc4
& 38 & 1 & 1458 & 2 & 0 & 178 & 1049 \\

pc3
& 38 & 1 & 1563 & 2 & 0 & 160 & 1050 \\

kc2
& 22 & 1 & 522 & 2 & 0 & 107 & 1063 \\

pc1
& 22 & 1 & 1109 & 2 & 0 & 77 & 1068 \\

banknote-authentication
& 5 & 1 & 1372 & 2 & 0 & 610 & 1462 \\

blood-transfusion-service-center
& 5 & 1 & 748 & 2 & 0 & 178 & 1464 \\

ilpd
& 11 & 2 & 583 & 2 & 0 & 167 & 1480 \\

qsar-biodeg
& 42 & 1 & 1055 & 2 & 0 & 356 & 1494 \\

wdbc
& 31 & 1 & 569 & 2 & 0 & 212 & 1510 \\

cylinder-bands
& 40 & 22 & 540 & 2 & 999 & 228 & 6332 \\

dresses-sales
& 13 & 12 & 500 & 2 & 835 & 210 & 23381 \\

MiceProtein
& 82 & 5 & 1080 & 8 & 1396 & 105 & 40966 \\

car
& 7 & 7 & 1728 & 4 & 0 & 65 & 40975 \\

steel-plates-fault
& 28 & 1 & 1941 & 7 & 0 & 55 & 40982 \\

climate-model-simulation-crashes
& 21 & 1 & 540 & 2 & 0 & 46 & 40994 \\

\bottomrule
\end{tabular}
\label{datasets}
\end{table*}

\begin{table*}[t]
\centering
\caption{Hyperparameters and architectural details of TabPFN, ElTab
$3{\times}6$, and Looped $3{\times}6$.}
\label{tab:elastic_looped_hyperparameters}
\resizebox{0.95\textwidth}{!}{%
\begin{tabular}{lccc}
\toprule
\textbf{Hyperparameter}
& \textbf{TabPFN}
& \textbf{ElTab $3{\times}6$}
& \textbf{Looped $3{\times}6$} \\
\midrule

Learning Rate
& $1\times10^{-5}$
& $1\times10^{-4}$
& $1\times10^{-4}$ \\

Dropout
& 0.0
& 0.0
& 0.0 \\

Effective Batch Size
& 64
& 64
& 64 \\

Microbatch Size
& 8
& 8
& 8 \\

Microbatches per Epoch
& 4672
& 4672
& 4672 \\

Optimizer Updates per Epoch
& 584
& 584
& 584 \\

Aggregate $k$ Gradients
& 8
& 8
& 8 \\

Training Epochs
& 200
& 200
& 200 \\

\midrule

Embedding Size
& 512
& 512
& 512 \\

FFN/Decoder Hidden Size
& 1024
& 1024
& 1024 \\

Physical Transformer Layers
& 12
& 3
& 3 \\

Core Repetitions
& \textemdash
& 6
& 6 \\

Maximum Effective Layer Applications
& 12
& 18
& 18 \\

Number of Attention Heads
& 4
& 4
& 4 \\

Recompute Attention
& Attention only
& Attention only
& Attention only \\

Use AMP
& False
& False
& False \\

Optimizer
& AdamW
& AdamW
& AdamW \\

Parameter Count
& 25.82M
& 6.90M
& 6.90M \\

\midrule

Training Passes per Microbatch
& One 12-layer pass
& \shortstack{One full 18-layer pass\\
  + one sampled shortcut pass}
& One 18-layer pass \\

Shortcut Depth Sampling
& \textemdash
& $d \sim \operatorname{Uniform}\{1,2,3,4,5\}$
& \textemdash \\

Shortcut Effective Depths
& \textemdash
& $\{3,6,9,12,15\}$
& \textemdash \\

Full-Depth CE Weight
& 1.0
& 1.0
& 1.0 \\

Shortcut CE Weight
& \textemdash
& 0.1
& \textemdash \\

Consistency/KL Weight
& \textemdash
& 0.1
& \textemdash \\

Distillation Temperature
& \textemdash
& 2.0
& \textemdash \\

\bottomrule
\end{tabular}%
}
\end{table*}

\begin{table*}[t]
\label{tab:eltab_detail}
\centering
\caption{AUCROC on the 30 OpenML classification datasets for ElTab  evaluated at inference depths 1--6, compared with TabPFN. Values are means over 5 splits with 95\% Student-$t$ confidence intervals. The best AUCROC (higher is better) in each row is bold.}
\label{tab:openml_auc_elastic}
\setlength{\tabcolsep}{3.2pt}
\renewcommand{\arraystretch}{1.05}
\scriptsize
\begin{tabular}{@{}rccccccc@{}}
\toprule
DID & ElTab $d{=}1$ & ElTab $d{=}2$ & ElTab $d{=}3$ & ElTab $d{=}4$ & ElTab $d{=}5$ & ElTab $d{=}6$ & TabPFN \\
\midrule
11 & 0.9781 $\pm$ 0.0078 & 0.9916 $\pm$ 0.0050 & 0.9946 $\pm$ 0.0048 & 0.9960 $\pm$ 0.0037 & 0.9965 $\pm$ 0.0031 & 0.9967 $\pm$ 0.0029 & \textbf{0.9974} $\pm$ 0.0021 \\
14 & 0.9588 $\pm$ 0.0034 & 0.9703 $\pm$ 0.0037 & 0.9726 $\pm$ 0.0045 & 0.9730 $\pm$ 0.0040 & 0.9731 $\pm$ 0.0046 & 0.9731 $\pm$ 0.0044 & \textbf{0.9735} $\pm$ 0.0041 \\
15 & 0.9923 $\pm$ 0.0010 & 0.9940 $\pm$ 0.0012 & 0.9944 $\pm$ 0.0010 & 0.9945 $\pm$ 0.0009 & 0.9946 $\pm$ 0.0008 & \textbf{0.9947} $\pm$ 0.0008 & 0.9942 $\pm$ 0.0009 \\
16 & 0.9902 $\pm$ 0.0055 & 0.9962 $\pm$ 0.0020 & 0.9968 $\pm$ 0.0017 & \textbf{0.9970} $\pm$ 0.0015 & 0.9969 $\pm$ 0.0017 & 0.9967 $\pm$ 0.0017 & 0.9964 $\pm$ 0.0019 \\
18 & 0.9531 $\pm$ 0.0061 & 0.9626 $\pm$ 0.0030 & 0.9631 $\pm$ 0.0036 & 0.9633 $\pm$ 0.0031 & 0.9634 $\pm$ 0.0028 & \textbf{0.9635} $\pm$ 0.0022 & 0.9629 $\pm$ 0.0033 \\
22 & 0.9736 $\pm$ 0.0059 & 0.9805 $\pm$ 0.0026 & 0.9810 $\pm$ 0.0019 & 0.9818 $\pm$ 0.0018 & 0.9823 $\pm$ 0.0020 & 0.9823 $\pm$ 0.0023 & \textbf{0.9826} $\pm$ 0.0021 \\
23 & 0.6968 $\pm$ 0.0123 & 0.7050 $\pm$ 0.0145 & 0.7038 $\pm$ 0.0156 & 0.7082 $\pm$ 0.0153 & 0.7093 $\pm$ 0.0147 & 0.7092 $\pm$ 0.0158 & \textbf{0.7114} $\pm$ 0.0145 \\
29 & 0.9339 $\pm$ 0.0124 & 0.9352 $\pm$ 0.0109 & 0.9347 $\pm$ 0.0113 & 0.9350 $\pm$ 0.0120 & 0.9352 $\pm$ 0.0121 & 0.9349 $\pm$ 0.0119 & \textbf{0.9354} $\pm$ 0.0122 \\
31 & 0.7894 $\pm$ 0.0086 & 0.7858 $\pm$ 0.0078 & 0.7881 $\pm$ 0.0057 & 0.7884 $\pm$ 0.0077 & 0.7881 $\pm$ 0.0089 & 0.7869 $\pm$ 0.0090 & \textbf{0.7896} $\pm$ 0.0074 \\
37 & 0.8371 $\pm$ 0.0161 & 0.8334 $\pm$ 0.0188 & 0.8397 $\pm$ 0.0164 & 0.8413 $\pm$ 0.0166 & 0.8414 $\pm$ 0.0173 & \textbf{0.8415} $\pm$ 0.0173 & 0.8413 $\pm$ 0.0164 \\
50 & 0.8385 $\pm$ 0.0146 & 0.8691 $\pm$ 0.0228 & 0.9320 $\pm$ 0.0215 & 0.9536 $\pm$ 0.0149 & 0.9804 $\pm$ 0.0089 & \textbf{0.9881} $\pm$ 0.0089 & 0.9751 $\pm$ 0.0094 \\
54 & 0.9261 $\pm$ 0.0080 & 0.9420 $\pm$ 0.0070 & 0.9533 $\pm$ 0.0044 & 0.9574 $\pm$ 0.0036 & 0.9596 $\pm$ 0.0033 & \textbf{0.9605} $\pm$ 0.0034 & 0.9572 $\pm$ 0.0038 \\
188 & 0.8971 $\pm$ 0.0088 & 0.9116 $\pm$ 0.0079 & 0.9146 $\pm$ 0.0107 & 0.9151 $\pm$ 0.0111 & \textbf{0.9157} $\pm$ 0.0108 & \textbf{0.9157} $\pm$ 0.0101 & 0.9148 $\pm$ 0.0111 \\
458 & 0.9999 $\pm$ 0.0001 & \textbf{1.0000} $\pm$ 0.0000 & \textbf{1.0000} $\pm$ 0.0000 & \textbf{1.0000} $\pm$ 0.0000 & \textbf{1.0000} $\pm$ 0.0000 & \textbf{1.0000} $\pm$ 0.0000 & \textbf{1.0000} $\pm$ 0.0000 \\
469 & 0.5837 $\pm$ 0.0084 & \textbf{0.5846} $\pm$ 0.0088 & 0.5834 $\pm$ 0.0164 & 0.5755 $\pm$ 0.0159 & 0.5742 $\pm$ 0.0161 & 0.5780 $\pm$ 0.0134 & 0.5755 $\pm$ 0.0109 \\
1049 & 0.8926 $\pm$ 0.0107 & 0.9282 $\pm$ 0.0173 & 0.9293 $\pm$ 0.0143 & 0.9312 $\pm$ 0.0145 & 0.9321 $\pm$ 0.0152 & 0.9326 $\pm$ 0.0147 & \textbf{0.9331} $\pm$ 0.0183 \\
1050 & 0.8263 $\pm$ 0.0347 & \textbf{0.8311} $\pm$ 0.0342 & 0.8306 $\pm$ 0.0342 & 0.8301 $\pm$ 0.0350 & 0.8306 $\pm$ 0.0348 & \textbf{0.8311} $\pm$ 0.0361 & 0.8238 $\pm$ 0.0278 \\
1063 & 0.8243 $\pm$ 0.0426 & 0.8339 $\pm$ 0.0404 & \textbf{0.8353} $\pm$ 0.0415 & 0.8338 $\pm$ 0.0422 & 0.8347 $\pm$ 0.0432 & 0.8347 $\pm$ 0.0438 & 0.8340 $\pm$ 0.0420 \\
1068 & 0.8246 $\pm$ 0.0377 & 0.8601 $\pm$ 0.0594 & 0.8755 $\pm$ 0.0533 & \textbf{0.8790} $\pm$ 0.0499 & 0.8758 $\pm$ 0.0497 & 0.8756 $\pm$ 0.0504 & 0.8716 $\pm$ 0.0548 \\
1462 & \textbf{1.0000} $\pm$ 0.0000 & \textbf{1.0000} $\pm$ 0.0000 & \textbf{1.0000} $\pm$ 0.0000 & \textbf{1.0000} $\pm$ 0.0000 & \textbf{1.0000} $\pm$ 0.0000 & \textbf{1.0000} $\pm$ 0.0000 & \textbf{1.0000} $\pm$ 0.0000 \\
1464 & 0.7579 $\pm$ 0.0156 & 0.7560 $\pm$ 0.0200 & 0.7557 $\pm$ 0.0185 & 0.7564 $\pm$ 0.0182 & 0.7578 $\pm$ 0.0179 & \textbf{0.7585} $\pm$ 0.0190 & 0.7564 $\pm$ 0.0217 \\
1480 & 0.7264 $\pm$ 0.0445 & 0.7347 $\pm$ 0.0464 & 0.7383 $\pm$ 0.0417 & 0.7376 $\pm$ 0.0407 & 0.7392 $\pm$ 0.0403 & \textbf{0.7399} $\pm$ 0.0398 & 0.7389 $\pm$ 0.0447 \\
1494 & 0.9237 $\pm$ 0.0131 & 0.9308 $\pm$ 0.0135 & 0.9320 $\pm$ 0.0144 & 0.9326 $\pm$ 0.0136 & 0.9326 $\pm$ 0.0139 & 0.9325 $\pm$ 0.0137 & \textbf{0.9339} $\pm$ 0.0135 \\
1510 & 0.9964 $\pm$ 0.0044 & 0.9964 $\pm$ 0.0048 & 0.9964 $\pm$ 0.0047 & 0.9964 $\pm$ 0.0047 & 0.9962 $\pm$ 0.0048 & 0.9963 $\pm$ 0.0048 & \textbf{0.9965} $\pm$ 0.0040 \\
6332 & \textbf{0.7935} $\pm$ 0.0289 & 0.7832 $\pm$ 0.0324 & 0.7800 $\pm$ 0.0337 & 0.7741 $\pm$ 0.0320 & 0.7743 $\pm$ 0.0321 & 0.7756 $\pm$ 0.0321 & 0.7815 $\pm$ 0.0274 \\
23381 & 0.4930 $\pm$ 0.0704 & 0.5115 $\pm$ 0.0802 & 0.4978 $\pm$ 0.0706 & 0.5145 $\pm$ 0.0993 & 0.5125 $\pm$ 0.1030 & \textbf{0.5190} $\pm$ 0.1001 & 0.5092 $\pm$ 0.0954 \\
40966 & 0.9994 $\pm$ 0.0004 & \textbf{1.0000} $\pm$ 0.0000 & \textbf{1.0000} $\pm$ 0.0000 & \textbf{1.0000} $\pm$ 0.0000 & \textbf{1.0000} $\pm$ 0.0000 & \textbf{1.0000} $\pm$ 0.0000 & \textbf{1.0000} $\pm$ 0.0000 \\
40975 & 0.9768 $\pm$ 0.0122 & 0.9754 $\pm$ 0.0148 & 0.9740 $\pm$ 0.0157 & 0.9839 $\pm$ 0.0106 & 0.9853 $\pm$ 0.0109 & 0.9855 $\pm$ 0.0119 & \textbf{0.9874} $\pm$ 0.0081 \\
40982 & 0.9453 $\pm$ 0.0168 & 0.9483 $\pm$ 0.0182 & 0.9484 $\pm$ 0.0174 & 0.9502 $\pm$ 0.0163 & 0.9503 $\pm$ 0.0163 & 0.9497 $\pm$ 0.0167 & \textbf{0.9553} $\pm$ 0.0132 \\
40994 & 0.9168 $\pm$ 0.0165 & 0.9400 $\pm$ 0.0128 & \textbf{0.9430} $\pm$ 0.0093 & \textbf{0.9430} $\pm$ 0.0108 & 0.9419 $\pm$ 0.0083 & 0.9419 $\pm$ 0.0089 & 0.9411 $\pm$ 0.0090 \\
\midrule
Mean & 0.8748 & 0.8830 & 0.8863 & 0.8881 & 0.8891 & \textbf{0.8898} & 0.8890 \\
\bottomrule
\end{tabular}
\end{table*}

\begin{table*}[t]
\label{tab:looped}
\centering
\caption{AUCROC on the 30 OpenML classification datasets for fixed-depth looped models (3$\times$1 to 3$\times$6) compared with TabPFN. Values are means over 5 splits with 95\% Student-$t$ confidence intervals. The best AUCROC (higher is better) in each row is bold.}
\label{tab:openml_auc_fixed}
\setlength{\tabcolsep}{3.2pt}
\renewcommand{\arraystretch}{1.05}
\scriptsize
\begin{tabular}{@{}rccccccc@{}}
\toprule
DID & Looped 3$\times$1 & Looped 3$\times$2 & Looped 3$\times$3 & Looped 3$\times$4 & Looped 3$\times$5 & Looped 3$\times$6 & TabPFN \\
\midrule
11 & 0.9908 $\pm$ 0.0059 & 0.9940 $\pm$ 0.0043 & 0.9959 $\pm$ 0.0036 & 0.9964 $\pm$ 0.0034 & 0.9971 $\pm$ 0.0023 & \textbf{0.9976} $\pm$ 0.0027 & 0.9974 $\pm$ 0.0021 \\
14 & 0.9676 $\pm$ 0.0036 & 0.9726 $\pm$ 0.0035 & 0.9729 $\pm$ 0.0045 & 0.9725 $\pm$ 0.0044 & 0.9730 $\pm$ 0.0046 & 0.9734 $\pm$ 0.0045 & \textbf{0.9735} $\pm$ 0.0041 \\
15 & 0.9935 $\pm$ 0.0010 & \textbf{0.9945} $\pm$ 0.0009 & 0.9944 $\pm$ 0.0005 & 0.9942 $\pm$ 0.0008 & \textbf{0.9945} $\pm$ 0.0008 & \textbf{0.9945} $\pm$ 0.0008 & 0.9942 $\pm$ 0.0009 \\
16 & 0.9943 $\pm$ 0.0030 & 0.9958 $\pm$ 0.0021 & 0.9961 $\pm$ 0.0019 & 0.9967 $\pm$ 0.0017 & 0.9967 $\pm$ 0.0017 & \textbf{0.9969} $\pm$ 0.0018 & 0.9964 $\pm$ 0.0019 \\
18 & 0.9591 $\pm$ 0.0034 & \textbf{0.9634} $\pm$ 0.0034 & \textbf{0.9634} $\pm$ 0.0033 & 0.9630 $\pm$ 0.0025 & 0.9626 $\pm$ 0.0035 & \textbf{0.9634} $\pm$ 0.0026 & 0.9629 $\pm$ 0.0033 \\
22 & 0.9759 $\pm$ 0.0052 & 0.9796 $\pm$ 0.0036 & 0.9813 $\pm$ 0.0022 & 0.9819 $\pm$ 0.0016 & 0.9819 $\pm$ 0.0013 & 0.9815 $\pm$ 0.0014 & \textbf{0.9826} $\pm$ 0.0021 \\
23 & 0.7057 $\pm$ 0.0142 & 0.7084 $\pm$ 0.0147 & 0.7086 $\pm$ 0.0138 & 0.7100 $\pm$ 0.0126 & 0.7105 $\pm$ 0.0138 & 0.7093 $\pm$ 0.0133 & \textbf{0.7114} $\pm$ 0.0145 \\
29 & 0.9339 $\pm$ 0.0115 & 0.9340 $\pm$ 0.0119 & 0.9352 $\pm$ 0.0134 & 0.9342 $\pm$ 0.0119 & 0.9344 $\pm$ 0.0123 & 0.9348 $\pm$ 0.0131 & \textbf{0.9354} $\pm$ 0.0122 \\
31 & 0.7879 $\pm$ 0.0064 & 0.7880 $\pm$ 0.0074 & 0.7873 $\pm$ 0.0081 & 0.7863 $\pm$ 0.0080 & 0.7865 $\pm$ 0.0083 & 0.7862 $\pm$ 0.0087 & \textbf{0.7896} $\pm$ 0.0074 \\
37 & 0.8311 $\pm$ 0.0188 & 0.8419 $\pm$ 0.0129 & \textbf{0.8426} $\pm$ 0.0167 & 0.8411 $\pm$ 0.0167 & 0.8424 $\pm$ 0.0157 & 0.8412 $\pm$ 0.0173 & 0.8413 $\pm$ 0.0164 \\
50 & 0.8733 $\pm$ 0.0179 & 0.9311 $\pm$ 0.0139 & 0.9538 $\pm$ 0.0118 & 0.9727 $\pm$ 0.0092 & 0.9692 $\pm$ 0.0144 & \textbf{0.9904} $\pm$ 0.0073 & 0.9751 $\pm$ 0.0094 \\
54 & 0.9395 $\pm$ 0.0080 & 0.9473 $\pm$ 0.0068 & 0.9530 $\pm$ 0.0056 & 0.9592 $\pm$ 0.0038 & 0.9600 $\pm$ 0.0037 & \textbf{0.9617} $\pm$ 0.0041 & 0.9572 $\pm$ 0.0038 \\
188 & 0.9051 $\pm$ 0.0101 & 0.9133 $\pm$ 0.0092 & 0.9147 $\pm$ 0.0112 & 0.9136 $\pm$ 0.0100 & \textbf{0.9149} $\pm$ 0.0108 & \textbf{0.9149} $\pm$ 0.0105 & 0.9148 $\pm$ 0.0111 \\
458 & \textbf{1.0000} $\pm$ 0.0001 & \textbf{1.0000} $\pm$ 0.0000 & \textbf{1.0000} $\pm$ 0.0000 & \textbf{1.0000} $\pm$ 0.0000 & \textbf{1.0000} $\pm$ 0.0000 & \textbf{1.0000} $\pm$ 0.0000 & \textbf{1.0000} $\pm$ 0.0000 \\
469 & 0.5700 $\pm$ 0.0116 & 0.5722 $\pm$ 0.0189 & \textbf{0.5841} $\pm$ 0.0167 & 0.5820 $\pm$ 0.0150 & 0.5760 $\pm$ 0.0100 & 0.5719 $\pm$ 0.0130 & 0.5755 $\pm$ 0.0109 \\
1049 & 0.9222 $\pm$ 0.0143 & 0.9299 $\pm$ 0.0139 & 0.9302 $\pm$ 0.0164 & 0.9310 $\pm$ 0.0167 & 0.9308 $\pm$ 0.0172 & 0.9314 $\pm$ 0.0180 & \textbf{0.9331} $\pm$ 0.0183 \\
1050 & 0.8307 $\pm$ 0.0268 & 0.8316 $\pm$ 0.0330 & 0.8295 $\pm$ 0.0374 & 0.8292 $\pm$ 0.0348 & 0.8309 $\pm$ 0.0365 & \textbf{0.8320} $\pm$ 0.0368 & 0.8238 $\pm$ 0.0278 \\
1063 & \textbf{0.8355} $\pm$ 0.0387 & 0.8308 $\pm$ 0.0382 & 0.8340 $\pm$ 0.0419 & 0.8327 $\pm$ 0.0400 & 0.8343 $\pm$ 0.0415 & 0.8350 $\pm$ 0.0416 & 0.8340 $\pm$ 0.0420 \\
1068 & 0.8587 $\pm$ 0.0582 & \textbf{0.8806} $\pm$ 0.0560 & 0.8727 $\pm$ 0.0524 & 0.8734 $\pm$ 0.0497 & 0.8674 $\pm$ 0.0482 & 0.8722 $\pm$ 0.0487 & 0.8716 $\pm$ 0.0548 \\
1462 & \textbf{1.0000} $\pm$ 0.0000 & \textbf{1.0000} $\pm$ 0.0000 & \textbf{1.0000} $\pm$ 0.0000 & \textbf{1.0000} $\pm$ 0.0000 & \textbf{1.0000} $\pm$ 0.0000 & \textbf{1.0000} $\pm$ 0.0000 & \textbf{1.0000} $\pm$ 0.0000 \\
1464 & 0.7521 $\pm$ 0.0216 & 0.7558 $\pm$ 0.0167 & 0.7555 $\pm$ 0.0188 & 0.7579 $\pm$ 0.0180 & 0.7581 $\pm$ 0.0191 & \textbf{0.7594} $\pm$ 0.0192 & 0.7564 $\pm$ 0.0217 \\
1480 & 0.7355 $\pm$ 0.0441 & 0.7366 $\pm$ 0.0364 & 0.7376 $\pm$ 0.0340 & 0.7382 $\pm$ 0.0351 & 0.7398 $\pm$ 0.0358 & \textbf{0.7406} $\pm$ 0.0359 & 0.7389 $\pm$ 0.0447 \\
1494 & 0.9297 $\pm$ 0.0134 & 0.9325 $\pm$ 0.0153 & 0.9321 $\pm$ 0.0137 & 0.9317 $\pm$ 0.0134 & 0.9324 $\pm$ 0.0132 & 0.9325 $\pm$ 0.0138 & \textbf{0.9339} $\pm$ 0.0135 \\
1510 & 0.9956 $\pm$ 0.0053 & 0.9963 $\pm$ 0.0048 & 0.9965 $\pm$ 0.0048 & 0.9965 $\pm$ 0.0046 & 0.9964 $\pm$ 0.0048 & \textbf{0.9966} $\pm$ 0.0046 & 0.9965 $\pm$ 0.0040 \\
6332 & 0.7753 $\pm$ 0.0331 & 0.7729 $\pm$ 0.0280 & 0.7598 $\pm$ 0.0340 & 0.7609 $\pm$ 0.0316 & 0.7643 $\pm$ 0.0272 & 0.7687 $\pm$ 0.0296 & \textbf{0.7815} $\pm$ 0.0274 \\
23381 & 0.4654 $\pm$ 0.0985 & 0.5101 $\pm$ 0.0819 & \textbf{0.5266} $\pm$ 0.1027 & 0.5204 $\pm$ 0.1110 & 0.5139 $\pm$ 0.0958 & 0.5207 $\pm$ 0.1031 & 0.5092 $\pm$ 0.0954 \\
40966 & \textbf{1.0000} $\pm$ 0.0000 & \textbf{1.0000} $\pm$ 0.0000 & \textbf{1.0000} $\pm$ 0.0000 & \textbf{1.0000} $\pm$ 0.0000 & \textbf{1.0000} $\pm$ 0.0000 & \textbf{1.0000} $\pm$ 0.0000 & \textbf{1.0000} $\pm$ 0.0000 \\
40975 & 0.9723 $\pm$ 0.0182 & 0.9768 $\pm$ 0.0133 & 0.9846 $\pm$ 0.0110 & 0.9798 $\pm$ 0.0130 & 0.9849 $\pm$ 0.0112 & 0.9846 $\pm$ 0.0106 & \textbf{0.9874} $\pm$ 0.0081 \\
40982 & 0.9430 $\pm$ 0.0182 & 0.9480 $\pm$ 0.0177 & 0.9492 $\pm$ 0.0161 & 0.9504 $\pm$ 0.0169 & 0.9515 $\pm$ 0.0165 & 0.9504 $\pm$ 0.0157 & \textbf{0.9553} $\pm$ 0.0132 \\
40994 & 0.9360 $\pm$ 0.0158 & 0.9372 $\pm$ 0.0142 & 0.9398 $\pm$ 0.0120 & 0.9389 $\pm$ 0.0097 & 0.9404 $\pm$ 0.0095 & 0.9393 $\pm$ 0.0080 & \textbf{0.9411} $\pm$ 0.0090 \\
\midrule
Mean & 0.8793 & 0.8858 & 0.8877 & 0.8882 & 0.8882 & \textbf{0.8894} & 0.8890 \\
\bottomrule
\end{tabular}
\end{table*}

\end{document}